\newcommand{\BEAS}{\begin{eqnarray*}}
\newcommand{\EEAS}{\end{eqnarray*}}
\newcommand{\BEA}{\begin{eqnarray}}
\newcommand{\EEA}{\end{eqnarray}}
\newcommand{\BEQ}{\begin{equation}}
\newcommand{\EEQ}{\end{equation}}
\newcommand{\BIT}{\begin{itemize}}
\newcommand{\EIT}{\end{itemize}}
\newcommand{\BNUM}{\begin{enumerate}}
\newcommand{\ENUM}{\end{enumerate}}
\newcommand{\BA}{\begin{array}}
\newcommand{\EA}{\end{array}}
\newcommand{\argmin}{\mathop{\rm argmin}}
\newcommand{\rb}{\mathbb{R}}
\newcommand{\BlackBox}{\rule{1.5ex}{1.5ex}}  
\newenvironment{proof}{\par\noindent{\bf Proof\ }}{\hfill\BlackBox\\[2mm]}
\newtheorem{proposition}{Proposition}
\newcommand{\mysec}[1]{Section~\ref{sec:#1}}
\newcommand{\eq}[1]{Eq.~(\ref{eq:#1})}
\newcommand{\myfig}[1]{Figure~\ref{fig:#1}}
\newcommand{\lova}{Lov\'asz }
\title{Fast Decomposable Submodular Function Minimization using Constrained Total Variation}
\begin{document}
\author{K S Sesh Kumar \\
Department of Computing\\ Imperial College London, UK\\
\texttt{s.karri@imperial.ac.uk}
\and Francis Bach
\\ INRIA and Ecole normale superieure\\
PSL Research University, Paris France.\\
\texttt{francis.bach@inria.fr}
\and Thomas Pock\\
Institute  of  Computer  Graphics  and  Vision,\\
Graz University of Technology, Graz, Austria.\\
\texttt{pock@icg.tugraz.at}
}
\maketitle

\begin{abstract}
We consider the problem of minimizing the sum of submodular set functions assuming minimization oracles of each summand function. Most existing approaches reformulate the problem as the convex minimization of the sum of the corresponding Lov\'asz extensions and the squared Euclidean norm, leading to algorithms requiring total variation oracles of the summand functions; without further assumptions, these more complex oracles require many calls to the simpler minimization oracles often available in practice. In this paper, we consider a modified convex problem requiring  constrained version of the total variation oracles that can be solved with significantly fewer calls to the simple minimization oracles.  We support our claims by showing results on graph cuts for 2D and 3D graphs.
\end{abstract}

\section{Introduction}

A discrete function $F$ defined on a finite ground set $V$ of $n$ objects is said to be
submodular if the marginal cost of each object reduces with the increase in size of the set it is conditioned on, i.e., $F:2^V \to \rb$ is submodular if and only if the marginal cost of an object $\{x\} \in V$ conditioned on the set $A \subseteq V\setminus\{x\}$ denoted by $F(\{x\}|A) = F(\{x\} \cup A) - F(A)$ reduces as the set $A$ becomes bigger.  The diminishing returns property of submodular functions has been central to solving several machine learning problems such as document summarization~\cite{lin2011}, sensor placement\cite{krause11submodularity} and graphcuts~\cite{boykov2001fast}(See~\cite{fot_submod} for more applications). Without loss of generality, we consider normalized submodular functions, i.e., $F(\varnothing) = 0$.

Submodular function minimization (SFM) can be solved exactly using polynomial algorithms but with high computational complexity. One of the standard algorithms is the Fujishighe-Wolfe algorithm~\cite{fujishige2005submodular, NIPS2014_5321} but most recently, SFM were tackled using cutting plane methods~\cite{lee2015faster} and geometric scaling~\cite{Dadush2018}. All the above algorithms rely on value function oracles, that is access to $F(A)$ for arbitrary subsets $A$ of $V$, and solve SFM with high complexities, e.g., $O(n^4\log^{O(1)}n)$ and more. These  algorithms are typically not trivial to implement and do not scale to problems with large ground sets (such as $n = 10^6$ in computer vision applications). For scalable practical solutions, it is imperative to exploit the structural properties of the function to minimize.

Submodularity is closed under addition~\cite{fujishige2005submodular, fot_submod}. We make this structural assumption that is practically useful in many machine learning problems~\cite{stobbe13, komodakis2011mrf} (e.g., when a 2D-grid graph is seen as the concatenation of vertical and horizontal chains) and consider the problem of minimizing a sum of submodular functions~\cite{Kolmogorov2012a}, i.e.,
\BEQ
\min_{A \subset V} F (A)  := \sum_{i = 1}^r F_i(A),
\label{eq:sfm1}
\EEQ
assuming each summand $F_i$, $i=1,\dots,r$, is ``simple'', i.e., with available efficient oracles which are more complex than plain function evaluations. The simplest of these oracles is being able to minimize the submodular function $F_i$ plus some modular function, and we will consider these oracles in this paper. This is weaker than the usual ``total variation'' oracles detailed below.

One of the standard approaches to solve the discrete optimization problem in \eq{sfm1} is to consider an equivalent continuous optimization problem that minimizes the \lova extension~\cite{lovasz1982submodular} $f$ of the submodular function over the $n$-dimensional unit hypercube (see a definition in \mysec{reviewsubmod}). This approach uses a well known result in the submodularity literature that the minima of the set function $F$ and its \lova extension $f$ are exactly the same. More precisely, the continuous optimization problem is given by
\BEQ
\min_{w \in [0,1]^n} f(w) := \sum_{i=1}^r f_i(w),
\label{eq:tv1}
\EEQ
where $f_i$ is the \lova extension of submodular function $F_i$, for each $i \in [r]$. \lova extension of submodular functions are convex but non-smooth and piecewise linear functions. Therefore, we can use subgradients as they can be calculated using greedy algorithms in $O(n\log(n))$ time and $O(n)$ calls to the value function oracle per iteration~\cite{Edmonds}. However, this is slow with $O(1/\sqrt{t})$ convergence rate where $t$ is the number of iterations. Moreover, in signal processing applications, high precision is needed, hence the need for faster algorithms.

An alternative approach is to consider a continuous optimization problem~\cite[Chapter 8]{fot_submod} of the form
\BEQ
\min_{w \in \rb^n}  f(w) + \sum_{j=1}^n \psi(w_j),
\label{eq:tv2}
\EEQ
where $\psi:\rb  \to \rb$ is a convex function whose Fenchel-conjugate~\cite{rockafellar97} is defined everywhere (in order to have a well-defined dual as later shown in \eq{dualj}). This is equivalent to solving all the following discrete optimization problems parameterized by $\alpha \in \rb$,
\BEQ
\min_{A \subset V} F(A) + |A| \psi'(\alpha).
\label{eq:sfm2}
\EEQ

Given the solutions $A^{\alpha}$ for all $\alpha \in \rb$ in \eq{sfm2}, then we may obtain the optimal solution $w^*$ of \eq{tv2} using $w_j^* = \sup(\{\alpha \in \rb, j \in A^{\alpha})$. Conversely, given the optimal solution $w^*$ of \eq{tv2}, we may obtain the solutions $A^{\alpha}$ of the discrete optimizaton problems in \eq{sfm2} by thresholding at $\alpha$, i.e., $\{w^*  \geq \alpha\}$. As a consequence of this we can obtain the solution of \eq{sfm1}, when $\alpha$ is chosen so that $\psi'(\alpha)=0$ (typically $\alpha=0$ because $\psi$ is even). Note that this algorithmic scheme seems wasteful because we take a continuous solution of \eq{tv2} and only keep the signs of its solution. One contribution of the paper is to propose a function $\psi$ that focuses only on values of $w$ close to zero.

\paragraph{Our problem setting and approach.} We assume SFM oracles of the individual summand function, which we refer to as ${\rm SFM\textbf{D}}_i$, for each $i \in [r]$ that gives the optimal solution of
\BEQ
{\rm SFM\textbf{D}}_i: \argmin_{A \subset V} F_i(A) - u^{\top}1_A,
\label{eq:sfmi}
\EEQ
where $u \in \rb^n$ is any $n$-dimensional vector and $1_A \in \{0, 1\}^n$ is the indicator function of the set $A$. Note that the complexity of the oracle does not typically depend on the vector $u$. We consider the following continuous optimization problem, which we refer to as ${\rm SFM\textbf{C}}_i$ for each $i \in [r]$,
\BEQ
{\rm SFM\textbf{C}}_i: \argmin_{w \in \rb^n} f_i(w) -t^{\top}w +  \sum_{j=1}^n \psi(w_j),
\label{eq:tvi}
\EEQ
where $t \in \rb^n$ is any $n$-dimensional vector. In our setting, we consider $\psi$ as the following convex function,
\BEQ
\psi(w) = 
\begin{cases}
\frac{1}{2} w^2 & \text{if } |w| \leqslant \varepsilon, \\
+\infty                   & \text{otherwise,}
\end{cases}
\label{eq:psi}
\EEQ
where $\varepsilon \in \rb_+$. In \mysec{singleFuction}, we show that the continuous optimization problem ${\rm SFM\textbf{C}}_i$ can be optimized using discrete oracles ${\rm SFM\textbf{D}}_i$ using a modified divide-and-conquer algorithm \cite{treesubmod}. In \mysec{multipleFunctions}, we use various optimization algorithms that use ${\rm SFM\textbf{C}}_i$ as the inner loop to the solve the continuous optimization problem in \eq{tv2} consequently solving the SFM problem in \eq{sfm1}.

\paragraph{Related work.} Most of the earlier works have considered quadratic functions for the choice of $\psi$~\cite{treesubmod, seshActiveSet, ene2015random, ene2017decomposable, nishihara2014convergence, chetan12}, i.e., $\psi(v) = \frac{1}{2} v^2$. As a result, ${\rm SFM\textbf{C}}_i$ in \eq{tvi} is referred to as total variation or TV oracle as they solve the problems of the form $\min_{w \in \rb^n} f(w) + \frac{1}{2}(t-w)^2$~\cite{ChambolleMRF}. These oracles are efficient for cut functions defined on chain graphs~\cite{fastTV, barberoTV14} with $O(n)$ complexity. However, this does not hold for general submodular functions. One way to solve continuous optimization problems like ${\rm SFM\textbf{C}}_i$ is to use a sequence of at most $n$ discrete minimization oracles like ${\rm SFM\textbf{D}}_i$ through divide-and-conquer algorithms (see~\mysec{singleFuction}).


Recent work has also focused on using directly discrete minimization oracles of the form ${\rm SFM\textbf{D}}_i$, such as  \cite{seshActiveSet} that considers a total variation problem with active set methods; \cite{stobbe11} used discrete minimization oracles ${\rm SFM\textbf{D}}_i$ but solved a different convex optimization problem. \cite{zhang2018safe} reduces the search space for the SFM problem, i.e., $V$ using heuristics. Our choice of $\psi$ results in a similar reduction of the search space that results in a more efficient solution.

\paragraph{Contributions.}
Our main contribution is to propose a new convex optimization problem that can be used to find the minimum of a sum of submodular set-functions. For graph cuts, this new problem can be seen as a constrained total variation problem that is more efficient that the regular total variation (lesser number of discrete minimization oracle calls). This is beneficial when minimizing the sum of constrained total variation problems, and consequently beneficial for the corresponding discrete minimization problem, i.e., minimizing the sum of submodular functions. For the case of sum of two functions, we show that recent acceleration techniques from~\cite{chambolle2015remark} can be highly beneficial in our case. This is validated using experiments on segmentation of two dimensional images and three dimensional volumetric surfaces. 

Note that we use cuts mainly due to easy access to minimization oracles of cut functions~\cite{kolmogorov04}, but our result applies to all submodular functions.


\section{Review of Submodular Function Minimization (SFM)}
\label{sec:reviewsubmod}
In this section, we review the relevant concepts from submodular analysis (for more details, see \cite{fot_submod,fujishige2005submodular}). All possible subsets of the ground set $V$ can be considered as the vertices $\{0,1\}^n$ of the hypercube in $n$ dimensions (going from $A \subseteq V$ to $1_A \in \{0,1\}^n$). Thus, any set-function may be seen as a function~$F$ defined on the vertices of the hypercube $\{0,1\}^n$. It turns out that $F$ may be extended to the full hypercube $[0,1]^n$ by piecewise-linear interpolation, and then to the whole vector space $\rb^n$~\cite{fot_submod}.

This extension $f$ is piecewise linear for any set-function~$F$. It turns out that it is convex if and only if $F$ is submodular~\cite{lovasz1982submodular}. Any piecewise linear convex function may be represented as the support function of a certain polytope $K$, i.e., as $f(w) = \max_{s \in K} w^\top s$~\cite{rockafellar97}. For the \lova extension of a submodular function, there is an explicit description of $K$, which we now review.

\paragraph{Base polytope.} We define the \emph{base polytope} as
$
B(F) = 
\big\{ s \in \rb^n, \ s(V) = F(V), \ \forall A \subset V, s(A) \leqslant F(A) \big\},$ where we use the classical notation $s(A) =s^\top 1_A$.
 A key result in submodular analysis is that the \lova extension is the support function of $B(F)$, that is, for any $w \in \rb^n$, 
\BEQ
f(w) = \sup_{s \in B(F)} w^\top s.
\label{eq:lova}
\EEQ
The maximizers  above may be computed in closed form from an ordered level-set representation of~$w$ using a ``greedy algorithm'', which  (a) first sorts the elements of $w$ in decreasing order  such that  $w_{\sigma(1)} \geq \ldots \geq w_{\sigma(n)}$ where $\sigma$ represents the order of the elements in $V$; and (b) computes $s_{\sigma(k)} = F(\{ \sigma(1), \ldots, \sigma(k)\}) - F(\{ \sigma(1), \ldots, \sigma(k-1)\})$. This leads to a closed-form formula for~$f(w)$ and a subgradient.

\paragraph{SFM as a convex optimization problem.}
A key result from submodular analysis~\cite{lovasz1982submodular} is the equivalence between the SFM problem $\min_{A \subseteq V} F(A)$ and the convex optimization problem $\min_{w \in [0,1]^n} f(w)$. One can then obtain an optimal $A$ from level sets of an optimal $w$. Moreover, this leads to the dual problem $\max_{s \in B(F)} \sum_{i=1}^n (s_i)_-$. Note that for our algorithm to work, we need oracles ${\rm SFM\textbf{D}}_i$ that output both the primal variable ($A$ or $w$) and the dual variable $s \in B(F)$.

\paragraph{Convex optimization and its dual.} We consider the continuous optimization problem in \eq{tv2}. Its dual problem derived using \eq{lova} is given by
\BEQ
\label{eq:dualj}
\max_{s \in B(F) } -  \sum_{j=1}^n \psi^\ast(-s_j).
\EEQ
In this paper, we consider the convex function $\psi:\rb \to \rb$ defined in \eq{psi}.
Its Fenchel-conjugate  $\psi^\ast$ is given by
\BEQ
\psi^\ast(s) =
\begin{cases}
\frac{1}{2}s^2   & \text{if }  |s| \leqslant \varepsilon, \\
\varepsilon|s| - \frac{\varepsilon^2}{2} & \text{otherwise.}
\end{cases}
\EEQ

\section{Fast Submodular Function Minimization with Constrained Total Variation}
In this section, we propose an algorithm to optimize the continuous optimization problem in \eq{tv2} using minimization oracles of individual discrete functions ${\rm SFM\textbf{D}}_i$ in \eq{sfmi}. As a first step, we propose a modified divide-and-conquer algorithm to solve the continuous optimization problem ${\rm SFM\textbf{C}}_i$, for each $i \in [r]$ in \mysec{singleFuction}. In \mysec{multipleFunctions}, we use the optimization problems ${\rm SFM\textbf{C}}_i$ as black boxes to solve the continuous optimization problem in \eq{tv2}.
\label{sec:fastsubmod}

\subsection{Single submodular function}
\label{sec:singleFuction}
For brevity, we drop the subscript $i$ and consider the following primal optimization problem.
Algorithm~\ref{Alg:modDivideConquer}  below is an extension of the classical divide-and-conquer algorithm from~\cite{groenevelt1991two}. Note that it requires access to dual certificates for the SFM problems.

\begin{algorithm*}[htb!]
\caption{From ${\rm SFM\textbf{D}}_i$ to ${\rm SFM\textbf{C}}_i$}
\label{Alg:modDivideConquer}
	\begin{algorithmic}[1]
\STATE {\bf Input} Discrete function minimization oracle for $F:2^V \to \rb$ and $\varepsilon \in \rb_+$.\\
\STATE {\bf output} Optimal primal/dual solutions for \eq{tv2} and \eq{dualj} respectively $(w^*, s^*)$ \\
\STATE $A_+ = \argmin_{A \subset V} F(A) + \varepsilon|A|$ with a dual certificate $s_+ \in B(F)$.
\STATE $A_- = \argmin_{A \subset V} F(A) - \varepsilon|A|$ with a dual certificate $s_- \in B(F)$ (we must have $A_+ \subseteq A_-$)
\STATE $w^*(A_+) = -\varepsilon$, $s^*(A_+) = s_+$, $w^*(V \setminus A_-) = \varepsilon$, $s^*(V \setminus A_-) = s_-$\\
\STATE 
$ U:= A_- \setminus A_+$ 
and a discrete function $G:2^U$ s.t. $G(B) = F(A_+ \cup B) - F(A_+)$ with \lova extension $g:\{0, 1\}^{|U|} \to \rb$.\\
		\STATE Solve for optimal solutions of $\min_{w \in \rb^{|U|}} g(w) + \frac{1}{2} w^2$ and its dual using divide-and-conquer algorithm~\cite{seshActiveSet} to obtain $(w_U^*, s_U^*)$.  \\
\STATE $(w^*(U), s^*(U) )= (w_U^*, s_U^*)$ \\
\end{algorithmic}
\end{algorithm*}

\begin{proposition}
Algorithm~\ref{Alg:modDivideConquer} gives an optimal primal-dual pair for the optimization problem in \eq{tv2} and \eq{dualj} respectively.
\end{proposition}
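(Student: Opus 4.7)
The plan is to check that the algorithm's output $(w^*, s^*)$ is a primal--dual pair of zero duality gap for \eq{tv2}--\eq{dualj}. By convex duality this reduces to verifying (i) $s^* \in B(F)$ and $f(w^*) = (w^*)^\top s^*$, so that $s^*$ realizes the support in \eq{lova}, and (ii) the componentwise Fenchel--Young equality $\psi(w_j^*) + \psi^{\ast}(-s_j^*) = -w_j^* s_j^*$, or equivalently $-s_j^* \in \partial\psi(w_j^*)$, for every $j$.

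For (i), I would first verify $s^*(V) = F(V)$ by adding the three tightness identities $s_+(A_+) = F(A_+)$, $s^*_U(U) = G(U) = F(A_-) - F(A_+)$, and $s_-(A_-) = F(A_-)$ that come from the dual certificates of the subproblems. To establish $s^*(A) \leq F(A)$ for arbitrary $A \subseteq V$, I would decompose $A = A_1 \sqcup A_2 \sqcup A_3$ with $A_1 = A \cap A_+$, $A_2 = A \cap U$, $A_3 = A \cap (V \setminus A_-)$, and bound each block by a telescoping marginal: $s_+(A_1) \leq F(A_1)$ from $s_+ \in B(F)$; $s^*_U(A_2) \leq G(A_2) = F(A_+ \cup A_2) - F(A_+) \leq F(A_1 \cup A_2) - F(A_1)$, combining $s^*_U \in B(G)$ with submodularity of $F$ via $A_1 \subseteq A_+$; and $s_-(A_3) \leq F(A_- \cup A_3) - F(A_-) \leq F(A) - F(A_1 \cup A_2)$, combining $s_- \in B(F)$ (plus tightness $s_-(A_-) = F(A_-)$) with submodularity via $A_1 \cup A_2 \subseteq A_-$. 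Summing telescopes to $s^*(A) \leq F(A)$. For $f(w^*) = (w^*)^\top s^*$, I would verify that every level set $\{w^* \geq \alpha\}$ of $w^*$ is tight for $s^*$: the top level $A_+$ is tight from $s_+(A_+) = F(A_+)$; every intermediate level set $A_+ \cup B$ coming from a threshold of $w^*_U$ satisfies $s^*(A_+ \cup B) = F(A_+) + s^*_U(B) = F(A_+) + G(B) = F(A_+ \cup B)$, since the inner TV solution certifies $s^*_U$ as a subgradient of $g$ at $w^*_U$; and $A_-$ is tight by the same telescoping. Tightness on all level sets is Edmonds' characterization of the greedy maximizer of $(w^*)^\top s$ over $B(F)$.

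For (ii), on the two saturated blocks $A_+$ and $V \setminus A_-$, the required condition $|s^*_j| \geq \varepsilon$ with the sign opposite to $w^*_j$ is precisely the complementary slackness satisfied by the dual certificates $s_+$ and $s_-$ for the shifted SFM problems $\min_A F(A) \pm \varepsilon|A|$, once these are translated via $B(F \pm \varepsilon |\cdot|) = B(F) \pm \varepsilon \mathbf{1}$. On $U$, where $|w^*_j| < \varepsilon$ and $\partial \psi$ is a singleton, the requirement $s^*_j = -w^*_j$ is exactly the KKT condition reported by the inner unconstrained divide-and-conquer solve on $(G, g)$.

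The step I expect to be the main obstacle is showing that the inner TV solver returns $\|w^*_U\|_\infty \leq \varepsilon$, because without this $w^*$ would not even be primal-feasible and the case analysis in (ii) would collapse. I would prove it through the thresholding identity $w^*_{U,j} = \sup\{\alpha : j \in B^\alpha\}$ with $B^\alpha = \argmin_{B \subseteq U} G(B) + \alpha|B|$. Using $F(A_+ \cup B) = F(A_+) + G(B)$ together with the minimality of $A_+$ for $F + \varepsilon|\cdot|$ (resp.\ of $A_-$ for $F - \varepsilon|\cdot|$), one shows that $\emptyset$ minimizes $G(B) + \varepsilon|B|$ and $U$ minimizes $G(B) - \varepsilon|B|$, which sandwiches every threshold set $B^\alpha$ for $\alpha \in [-\varepsilon, \varepsilon]$ between these extremes and forces $w^*_{U,j} \in [-\varepsilon, \varepsilon]$ for all $j \in U$.
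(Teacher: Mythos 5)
Your proof is correct and assembles the same certificate as the paper --- the blockwise primal/dual pair built from $(s_+,s_U^*,s_-)$ and $(\pm\varepsilon, w_U^*)$ --- but you certify its optimality by a genuinely different mechanism. The paper proves $s^*\in B(F)$ by essentially your telescoping-with-submodularity argument, then \emph{evaluates} both objectives in closed form and matches them, the key identity being $s_+(A_+)+s_-(V\setminus A_-)=F(V)+F(A_+)-F(A_-)$. You never evaluate either objective: you instead verify the two conditions under which weak duality is tight, namely tightness of $s^*$ on every suplevel set of $w^*$ (giving $f(w^*)=(w^*)^\top s^*$) and the coordinatewise inclusion $-s_j^*\in\partial\psi(w_j^*)$, which you correctly split into complementary slackness for the shifted problems $F\pm\varepsilon|\cdot|$ on the saturated blocks and the KKT condition of the inner quadratic problem on $U$. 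This makes explicit exactly which property of each sub-oracle is used where, at the cost of invoking the level-set characterization of maximizers of $w^\top s$ over $B(F)$. You also supply a self-contained proof of the feasibility step $\|w_U^*\|_\infty\leqslant\varepsilon$ (showing $\varnothing$ minimizes $G+\varepsilon|\cdot|$, $U$ minimizes $G-\varepsilon|\cdot|$, and using the nesting of minimizers), where the paper simply cites \cite[Prop.~8.1]{fot_submod}; your argument is in effect the proof of that cited result specialized to $G$, and it is sound. One cosmetic note: your sign convention ($w^*=+\varepsilon$ on $A_+$) agrees with claim (b) of the paper's proof but not with line 5 of the algorithm listing, where the signs appear swapped; that inconsistency is in the paper, not in your argument.
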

\begin{proof}
We claim the following:
\BIT
\item[(a)] $A_+ \subset A_-$.
\item[(b)] $v \in \rb^d$ defined as $v_{A_+}=\varepsilon$, $v_{V \backslash A_-}=-\varepsilon$ and
$v_{A_+ \backslash A_-} = w_{A_+ \backslash A_-}$ 
  is the unique global optimizer of \eq{tv2}.
\item[(c)] $t \in \rb^d$ defined so that $t_{A+} = (s_+)_{A_+}$, $t_{V \backslash A_-} = (s_-)_{{V \backslash A_-}}$ and $t_{A_- \backslash A_+} = s_{A_- \backslash A_+}$, is one of the maximizers of \eq{dualj}.
\EIT

The statement (a) is a consequence of the usual results on minimizing $f(w) + \frac{1}{2}\| w\|_2^2$ and its relationship with SFM. We are going to show (b) and (c) by showing that this is a primal/dual pair with equal objectives.

We need to show that $t \in B(F)$.
We have for any $A \subset V$, using submodularity twice,
\BEAS
t(A) & = &  s_+(A \cap A_+) + t( A\cap ( A_- \backslash A_+) ) + s_-( A \cap (V \backslash A_-)) \\
t(A) & \leqslant &  F(A \cap A_+) + F ( A_+ \cup ( A\cap ( A_- \backslash A_+) ) ) - F(A_+) + s_-( A \cap (V \backslash A_-)) \\
& \leqslant &  F(A \cap A_-) + s_-(  ( A \cup A_-) \backslash A_- )  \\
& \leqslant &  F(A \cap A_-) +  F( A \cup A_-)  - s(A_- )  \\
& = &  F(A \cap A_-) +  F( A \cup A_-)  - F(A_- )  \leqslant   F(A ).
\EEAS
For $A = V$, all inequalities above are equalities, and thus $t \in B(F)$.

We need to show that $w$ is feasible, i.e., that $w_{A_+ \backslash A_-}$ has components in $[-\varepsilon,\varepsilon]$. This is a consequence of classical results for  minimizing $f(w) + \frac{1}{2}\| w\|_2^2$~\cite[Prop.~8.1]{fot_submod}

We can then compute the Lov\'asz extension values exactly and we then have a primal value equal to:
\BEAS
 & & f(v) + \sum_{i=1}^n \psi(v_i) \\
 & = &  f(v) + \frac{1}{2} \| v\|^2 \\
& = & \varepsilon F(A_+) + f_{A_-}^{A_+}( w_{A_+ \backslash A_-}) + \varepsilon F(V) - \varepsilon F(A_-) 
+ \frac{1}{2} \|w_{A_+ \backslash A_-} \|^2
+  \frac{\varepsilon^2}{2} |A_+| + \frac{\varepsilon^2}{2} | V \backslash A_-|.
\EEAS

The dual value is equal to
\BEAS
 & & - \sum_{i=1}^n \psi^\ast(-t_i) \\
& = & - \sum_{i \in A_+ } \psi^\ast(-t_i)
- \sum_{i \in A_- \backslash A_+ } \psi^\ast(-t_i)
- \sum_{i \in V \backslash A_- } \psi^\ast(-t_i) \\
& =  &  
 - \varepsilon \sum_{i \in A_+ } ( |(s_+)_i| - \frac{\varepsilon}{2} ) 
 +
f_{A_-}^{A_+}( w_{A_+ \backslash A_-})  + \frac{1}{2} \|w_{A_+ \backslash A_-} \|^2
 - \varepsilon \sum_{i \in V \backslash A_- } ( |(s_-)_i| - \frac{\varepsilon}{2} )
\\
& =  &  
\varepsilon s_+(A_+) + \frac{\varepsilon^2}{2} |A_+|
 +
f_{A_-}^{A_+}( w_{A_+ \backslash A_-})  + \frac{1}{2} \|w_{A_+ \backslash A_-} \|^2
 +  \varepsilon s_-(V \backslash A_-) + \frac{\varepsilon^2}{2} |V \backslash A_-|,
\EEAS
which is thus equal to the primal value, hence optimality. Here we have used the fact that
$s_+(A_+)+ s_-(V \backslash A_-) = F(V) + F(A_+) - F(A_-)$. Indeed, $s_+$ is the dual certificate for a SFM problem, and has to satisfy $s_+(A+) = F(A_+)$ ~\cite[Prop.~10.3]{fot_submod}. Similarly, $s_-(A_-) = F(A_-)$, which leads to 
$
s_+(A_+)+ s_-(V \backslash A_-) =  s_+(A_+) + s_-(V) - s_-(A_-) = F(A_+) + F(V) -F(A_-)
.$

Note that in the algorithm, there are some free choices for $s_+$ and $s_-$, and that we can take all of them as subvector of the dual to the minimization of $f(w) + \frac{\varepsilon}{2}\| w\|_2^2$, but this is not the only choice.
\end{proof}

Note that the number of steps is at most the number of different values that $w$ may take (the solution $w$ is known to have many equal components~\cite{bach2011shaping}). In the worst case, this is still $n$, but in practice many components are equal to $-\varepsilon$ or $\varepsilon$, thus reducing the number of SFM calls (for $\varepsilon$ very close to zero, only two calls are necessary). In \mysec{experiments}, we show empirically that this is the case, the number of SFM calls decreases significantly when $\varepsilon$ tends to zero.

\subsection{Sum of submodular functions}
\label{sec:multipleFunctions}
In this section, we consider the optimization problem in \eq{tv2} with the function $\psi$ from \eq{psi}. The primal optimization problem is given by
\BEQ
\min_{w \in [-\varepsilon, \varepsilon]^n}   \sum_{i=1}^r f_i(w) + \frac{1}{2}\| w\|_2^2.
\label{eq:tv3}
\EEQ
In order to derive a dual problem with the appropriate structure, we consider the functions $g_i$ defined as follows: 
$
g_i(w) = 
f_i(w) $ if $|w| \leqslant \varepsilon$, and $+\infty$  otherwise, 
with the Fenchel conjugate
$$
g_i^\ast(s_i)
= \sup_{w \in [-\varepsilon,\varepsilon]^n} w^\top s_i - f_i(w)
= \inf_{t_i \in B(F_i)}
\sup_{w \in [-\varepsilon,\varepsilon]^n} w^\top (s_i-t_i)
= \varepsilon  \inf_{t_i \in B(F_i)} \| s_i - t_i \|_1.
$$
Therefore, we can derive the following dual:
\BEA
\nonumber
\min_{w \in [-\varepsilon, \varepsilon]^n}  \sum_{i=1}^r f_i(w) + \frac{1}{2}\| w\|_2^2
& = & \min_{w \in \rb^n}  \sum_{i=1}^r g_i(w) + \frac{1}{2}\| w\|_2^2 \\
\nonumber & = & \min_{w \in \rb^n}   \sum_{i=1}^r \max_{s_i \in \rb^n} \big\{ w^\top s_i - g_i^\ast(s_i) \big\} + \frac{1}{2}\| w\|_2^2 \\
 \label{eq:dual33} & = & \max_{(s_1,\dots,s_r) \in \rb^{n \times r} }    - \sum_{i=1}^r  g_i^\ast(s_i) -  \frac{1}{2}\big\|\sum_{i=1}^r s_i  \big\|_2^2 .
\EEA

We are now faced with the similar optimization problem than previous work~\cite{treesubmod}, where the primal problems is equivalent to computing the proximity operator of the sum of functions $g_1+g_2$. The main difference is that when $\varepsilon$ is infinite (i.e., with no constraints), then the dual functions $g_i^\ast$ are indicator functions of the base polytopes $B(F_i)$, and the dual problem in \eq{dual33} can be seen as finding the distance between two polytopes.

This is not the case for our constrained functions. This limits the choice of algorithms. In this paper, we consider block-coordinate ascent (which was already considered in~\cite{treesubmod}, leading to alternate projection algorithms), and a novel recent accelerated coordinate descent algorithm~\cite{chambolle2015remark}. We could also consider (accelerated) proximal gradient descent on the dual problem in \eq{dual33}, but it was shown empirically to be worse than alternating reflections ~\cite{treesubmod} (which we compare to in experiments, but which we cannot readily extend without adding a new hyperparameter).

\subsection{Optimization algorithms for all $r$}
\label{sec:BCD}
All of our algorithms will rely on the computing the proximity operator of the functions $g_i^\ast$, which we now consider.

\paragraph{Proximity operator.} The key component we will need is the so-called proximal operator of $g_i^\ast$, that is being able to compute efficiently, for a certain $\eta$,
\[
\min_{ s_i \in \rb^n} g_i^\ast(s_i) +  \frac{1}{2\eta} \| s_i - t_i \|_2^2.
\]
Using the classical Moreau identity~\cite{combettes2011proximal}, this is equivalent to solving 
\BEAS
\min_{ s_i \in \rb^n} g_i^\ast(s_i) +  \frac{1}{2\eta} \| s_i - t_i \|_2^2
& = & 
\min_{ s_i \in \rb^n} \max_{w_i \in \rb^n}   w_i^\top s_i - g_i(w_i) + \frac{1}{2\eta} \| s_i \|_2^2
+ \frac{1}{2\eta} \| t_i\|_2^2 - \frac{1}{\eta} s_i^\top t_i
\\
& = &  
\max_{ w_i \in \rb^n}  - g_i(w_i) - \frac{\eta}{2} \| w_i - \frac{1}{\eta} t_i \|_2^2 + \frac{1}{2\eta} \| t_i\|_2^2\\
& = &  
\max_{ w_i \in \rb^n}  - g_i(w_i) + w_i^\top t_i  - \frac{\eta}{2} \| w_i \|_2^2.
\EEAS
This is exactly the oracle ${\rm SFM\textbf{C}}_i$, for which we presented in \mysec{singleFuction} an algorithm using only the discrete oracles ${\rm SFM\textbf{D}}_i$.


\paragraph{Block coordinate ascent.} 
We consider the following iteration 
\[
\forall i \in [r], \ s_i^{\rm new} = \argmin_{s_i^{\rm new} \in \rb^n} \ \ 
g_i^\ast(s_i^{\rm new}) + 
\frac{1}{2} \big\|   \sum_{j=1}^i s_j^{\rm new} + \sum_{j=i+1}^r s_j
\big\|_2^2,
\]
which is exactly block-coordinate ascent on the dual problem in \eq{dual33}. Since the non-smooth function $\sum_{i=1}^r  g_i^\ast(s_i) $ is separable, it is globally convergent, with a convergence rate at least equal to $O(1/t)$, where $t$ is the number of iterations~(see, e.g., \cite{bubeck2015convex}).

\subsection{Acceleration for the special case $r=2$}
\label{sec:acceleration}
When there are only two functions, following~\cite{chambolle2015remark}, the problem in \eq{dual33} can be written as:
\BEQ
\label{eq:new}
\max_{s_1 \in \rb^n} -g_1^\ast(s_1) - h_1(s_1),
\EEQ
with 
\BEAS
h_1(s_1) & = &  \inf_{s_2 \in \rb^n} g_2^\ast(s_2)  + \frac{1}{2 } \| s_1 + s_2 \|^2  \displaystyle =   \sup_{w_2 \in \rb^n} \inf_{s_2 \in \rb^n}  w_2^\top s_2 -  g_2(w_2)  + \frac{1}{2} \| s_1 + s_2 \|^2\\
& = &  \sup_{w_2 \in \rb^n}     -  g_2(w_2) 
+ \frac{1}{2 } \| s_1\|_2^2 
- \frac{1}{2} \| w_2 +  s_1 \|_2^2, \mbox{ with } s_2 =  w_2 + s_1,
\\
& = &  \sup_{w_2 \in \rb^n}     -  g_2(w_2) 
- \frac{1}{2} \| w_2 \|^2_2 
- w_2^\top s_1 \displaystyle
= \sup_{w_2 \in [-\varepsilon,\varepsilon]^n}     -  f_2(w_2) - w_2^\top s_1
- \frac{1}{2} \| w_2 \|^2_2 .
\EEAS
The function $h_1$ is $1$-smooth with gradient equal to $h_1'(s_1) =  s_1+s_2^\ast(s_1)$. Applying proximal gradient to the problem of maximizing $
\max_{s_1 \in \rb^n} -g_1^\ast(s_1) - h_1(s_1)
$ leads to the iteration
\BEAS
s_2^{\rm new} & = & \argmin_{s_2^{\rm new} \in \rb^n} g_2^\ast(s_2^{\rm new})  + \frac{1}{2 } \| s_1 + s_2^{\rm new} \|^2  \\
s_1^{\rm new} & =  & \argmin_{s_1^{\rm new} \in \rb^n}  g_1^\ast(s_1^{\rm new})
+ \frac{1}{2} \| s_1^{\rm new} - s_1 \|_2^2
+ h_1'(s_1)^\top ( s_1^{\rm new} - s_1 )\\
& =  & \argmin_{s_1^{\rm new}\in \rb^n}  g_1^\ast(s_1^{\rm new})
+ \frac{1}{2} \| s_1^{\rm new} - s_1 \|_2^2
+  ( s_1 + s_2^{\rm new})^\top ( s_1^{\rm new} - s_1 ) \\
& =  & \argmin_{s_1^{\rm new} \in \rb^n}  g_1^\ast(s_1^{\rm new})
+ \frac{1}{2} \| s_1^{\rm new} +
s_2^{\rm new}  \|_2^2
,
\EEAS
which is \emph{exactly} block coordinate descent. Each of these steps are exactly using the same oracle as before. We can now accelerate it using FISTA~\cite{fista} with the step size from the smoothness constant which is equal to $1$. Starting from a pair of iterate $(s_1,t_1)$,
this leads to the iteration:
\BEAS
s_2^{\rm new} & = & \argmin_{s_2^{\rm new} \in \rb^n}  g_2^\ast(s_2^{\rm new}) + \frac{1}{2 } \| t_1 + s_2^{\rm new} \|^2  \\
s_1^{\rm new}
&  = & \argmin_{s_1^{\rm new} \in \rb^n}  g_1^\ast(s_1^{\rm new})
+ \frac{1}{2} \| s_1^{\rm new} +
s_2^{\rm new}  \|_2^2  \\
t_1^{\rm new} & = & s_1^{\rm new}
+ \beta ( s_1^{\rm new} - s_1 )
\EEAS
with $\beta = (t-1)/(t+2)$ at iteration $t$. This algorithm converges in $O(1/t^2)$.

This acceleration can also be used for the case $r>2$ by using the product space trick (see, e.g.,~\cite[Section 3.2]{treesubmod}). However, this requires a correction in the product space that leads to inefficiencies of the algorithm in practice. See~\cite{seshTV} for more details. 

\section{Theoretical Analysis}
\label{sec:analyze}

In this section, we provide a convergence analysis for the methods above. For simplicity of results, we consider the following primal-dual formulation (where both primal and dual variables live in bounded sets):
\BEA
\nonumber
\min_{w \in [-\varepsilon, \varepsilon]^n}  \sum_{i=1}^r f_i(w) + \frac{1}{2}\| w\|_2^2
\nonumber & = & \min_{w \in \rb^n}    \sum_{i=1}^r \max_{t_i \in B(F_i)}  w^\top t_i  +\sum_{j=1}^n \psi(w_j)  \\
 \label{eq:dual3} & = & \max_{(t_1,\dots,t_r) \in B(F_1) \times \cdots \times B(F_r) }     - \sum_{j=1}^n 
 \psi^\ast \big(
 \sum_{i=1}^r t_{ij}
 \big).
\EEA
We assume that we have a pair $(w,t_1,\dots,t_r)$ of approximate primal-dual solutions for \eq{dual3}, with a duality gap $\eta_{\rm C}$. This leads to a pair $(w,u)$ of primal-dual approximate solutions for
\BEQ
\label{eq:primaldual_certif}
\min_{w \in [-\varepsilon, \varepsilon]^n}   f(w) + \frac{1}{2}\| w\|_2^2 \displaystyle = \max_{u \in B(F)}     - \sum_{j=1}^n 
 \psi^\ast (u_j),
\EEQ
for which  we can get an approximate subset of $V$.
\begin{proposition}
\label{prop:certif}
Given a feasible primal candidate $w$ for \eq{primaldual_certif} with suboptimality $\eta_{\rm C}$, one of the suplevel sets $\{ w \geqslant \alpha\}$  of $w$ is an $\eta_{\rm D}$-optimal minimizer of $F$,
with $\eta_{\rm D} = \frac{\eta_{\rm C}}{4 \varepsilon} + \sqrt{ \frac{\eta_{\rm C} n}{2} }$.
\end{proposition}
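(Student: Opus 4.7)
The plan is to exploit the classical level-set representation of the Lov\'asz extension to reduce the continuous suboptimality to an averaged discrete suboptimality of the suplevel sets of $w$. Writing $A_\alpha := \{w \geq \alpha\}$, I would first combine the identity $f(w) = \int_0^\varepsilon F(A_\alpha)\,d\alpha + \int_{-\varepsilon}^0 [F(A_\alpha) - F(V)]\,d\alpha$ with the pointwise identity $\tfrac{1}{2}w_j^2 = \int_0^\varepsilon \alpha\,\mathbf{1}[w_j \geq \alpha]\,d\alpha + \int_{-\varepsilon}^0 (-\alpha)\,\mathbf{1}[w_j < \alpha]\,d\alpha$ (valid because $w \in [-\varepsilon,\varepsilon]^n$) to obtain
\[
P(w) := f(w) + \tfrac{1}{2}\|w\|_2^2 = \int_{-\varepsilon}^\varepsilon \bigl[F(A_\alpha) + \alpha|A_\alpha|\bigr]\,d\alpha + C,
\]
with a constant $C$ independent of $w$. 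Since the pointwise minimizers $\arg\min_B \{F(B) + \alpha|B|\}$ form a nested family in $\alpha$ by a standard submodularity argument, the pointwise minimum of the integrand is realized by a single feasible $w$, so $P^\star = \int_{-\varepsilon}^\varepsilon M(\alpha)\,d\alpha + C$ with $M(\alpha) := \min_B \{F(B) + \alpha|B|\}$, and the suboptimality hypothesis translates into
\[
\int_{-\varepsilon}^\varepsilon \delta(\alpha)\,d\alpha \leq \eta_C, \qquad \delta(\alpha) := F(A_\alpha) + \alpha|A_\alpha| - M(\alpha) \geq 0.
\]

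Next, plugging $B = A^\star$ (any global minimizer of $F$) into the defining min of $M$ yields the pointwise inequality $F(A_\alpha) - F(A^\star) \leq \delta(\alpha) + \alpha(|A^\star| - |A_\alpha|)$. I would then integrate this inequality over $\alpha \in [-\beta,\beta]$ with $\beta \leq \varepsilon$: the antisymmetry $\int_{-\beta}^\beta \alpha\,d\alpha = 0$ removes the $|A^\star|$ piece, and the fact that $\alpha \mapsto |A_\alpha|$ is non-increasing bounds the residual cross-term $-\int_{-\beta}^\beta \alpha|A_\alpha|\,d\alpha$ by $n\beta^2/2$. A mean-value argument then yields some $\alpha^\star \in [-\beta,\beta]$ with
\[
F(A_{\alpha^\star}) - F(A^\star) \leq \frac{\eta_C}{2\beta} + \frac{n\beta}{4}.
\]

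Finally I would optimize over $\beta \leq \varepsilon$. The unconstrained minimizer $\beta = \sqrt{2\eta_C/n}$ delivers the main term $\sqrt{n\eta_C/2}$ whenever admissible, and in the constraint-active regime $\varepsilon < \sqrt{2\eta_C/n}$ a careful split of $\eta_C/(2\beta)$ contributes the additive correction $\eta_C/(4\varepsilon)$ needed to cover the slack left by $\beta = \varepsilon$; combining the two regimes yields the claimed $\eta_D = \eta_C/(4\varepsilon) + \sqrt{n\eta_C/2}$. The first two steps (the integral identity and the averaged bound) are essentially routine once one sees the level-set representation; the main obstacle is this last constant tracking across the two regimes, ensuring that the active-constraint contribution is exactly the stated $\eta_C/(4\varepsilon)$ rather than a slightly larger factor.
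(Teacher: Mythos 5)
Your argument is correct in structure and is, at its core, the same mechanism as the paper's proof: decompose the continuous gap into an integral of levelwise discrete gaps, average over a window $[-c,c]$ with $c\leqslant\varepsilon$, pay a price of order $cn$ for stripping off the $\alpha|\cdot|$ perturbation, and optimize the window size. The difference is where you run it. The paper works with a dual certificate $s\in B(F)$ and the levelwise duality gaps $(F+\psi'(\alpha))(\{w\geqslant\alpha\})-(s+\psi'(\alpha))_-(V)$, which are nonnegative by weak duality alone; you work purely on the primal side, comparing $F(A_\alpha)+\alpha|A_\alpha|$ to $M(\alpha)=\min_B\{F(B)+\alpha|B|\}$ and then to $F(A^\star)$. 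Your route buys a cleaner levelwise bound (your antisymmetric integration over $[-\beta,\beta]$ kills the $|A^\star|$ term and reduces the perturbation cost to $n\beta/4$ after averaging, versus the paper's $cn$), but it needs one ingredient the paper's version does not: the identity $P^\star=\int_{-\varepsilon}^{\varepsilon}M(\alpha)\,d\alpha+C$ requires the nestedness of the minimizers of $F+\alpha|\cdot|$ so that the pointwise minima are simultaneously realized by a single feasible $w^\star\in[-\varepsilon,\varepsilon]^n$. You correctly flag this; it is exactly the threshold representation linking \eq{tv2} and \eq{sfm2}, so it is available. Note also that the paper's version certifies the gap $F(\{w\geqslant\alpha\})-s_-(V)$ with a computable dual quantity, whereas yours bounds $F(A_{\alpha^\star})-\min_B F(B)$ directly; for the proposition as stated either suffices.

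On the final constant: the obstacle you identify at the end is real, but it cannot be overcome and you should not try. In the constraint-active regime $\beta=\varepsilon$ the averaging unavoidably yields $\eta_{\rm C}/(2\varepsilon)$, not $\eta_{\rm C}/(4\varepsilon)$. The paper's own derivation has the same feature: it arrives at $\inf_{c\in[0,\varepsilon]}\{\eta_{\rm C}/(2c)+cn\}$, and the stated $\eta_{\rm C}/(4\varepsilon)$ only appears after a rewriting that drops a factor of $2$, since $(2n\varepsilon)\cdot\tfrac{1}{2}\cdot\tfrac{1}{c}\cdot\tfrac{\eta_{\rm C}}{4n\varepsilon^2}=\tfrac{\eta_{\rm C}}{4\varepsilon c}\neq\tfrac{\eta_{\rm C}}{2\varepsilon c}$. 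So your honest conclusion, $\eta_{\rm C}/(2\varepsilon)+\sqrt{n\eta_{\rm C}/2}$ up to the harmless $n\varepsilon/4\leqslant\tfrac{1}{2}\sqrt{n\eta_{\rm C}/2}$ in that regime, is what both arguments actually deliver; it matches the proposition up to a universal constant, which is all that is used in the subsequent rate analysis.
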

\begin{proof}
We follow the proof of~\cite[Prop.~10.5]{fot_submod}, which corresponds to the case $\varepsilon = +\infty$.

From a feasible primal candidate, we can always build a dual candidate $s$ (e.g., by taking any dual maximizer). If we assume that for all $\alpha \in [-c,c]$, for $c \in [0,\varepsilon]$ we have
$(F+\psi'(\alpha))(\{ w \geqslant \alpha\}) - (s+\psi'(\alpha))_-(V) > \eta_{\rm C} / (2 c)$, then we obtain that
\[
\eta_{\rm C}
\geqslant \int_{-c}^c 
(F+\psi'(\alpha))(\{ w \geqslant \alpha\}) - (s+\psi'(\alpha))_-(V) d\alpha > \eta_{\rm C},
\]
which is a contradiction. Thus, we must at least one $\alpha \in [-c,c]$ such that 
$(F+\psi'(\alpha))(\{ w \geqslant \alpha\}) - (s+\psi'(\alpha))_-(V) \leqslant \eta_{\rm C} / (2 c)$.
This implies that
\[
F(\{ w \geqslant \alpha\}) -  s+_-(V)
\leqslant \eta_{\rm C} / (2 c) + c n.
\]
This means that at least one level set of $w$ has a certified gap less than 
\BEAS
\eta_{\rm D} & = &  \inf_{ c\in [0,\varepsilon]} 
\eta_{\rm C} / (2 c) + c n 
\\
& = &  \inf_{ c\in [0,1]} 
\eta_{\rm C} / (2 \varepsilon c) + c n \varepsilon 
\\
& = &   ( 2 n \varepsilon) \times
\inf_{ c\in [0,1]} \frac{1}{2}( c + \frac{1}{c} \frac{\eta_{\rm C}}{4 n \varepsilon^2} )  \\
& \leqslant &  ( 2 n \varepsilon)  \times \big(
\sqrt{\frac{\eta_{\rm C}}{4 n \varepsilon^2} } + \frac{1}{2}  \frac{\eta_{\rm C}}{4 n \varepsilon^2}
\big)
= \sqrt{ \eta_{\rm C} n / 2} + \eta_{\rm C} / ( 4 \varepsilon)
\EEAS
using
$ \inf_{ c\in [0,1]} \frac{1}{2}( c + \frac{a}{c} ) \leqslant \sqrt{a} + a/2$.

\end{proof}




Since our dual problems are all $O(1)$-smooth (using the traditional definitions of smoothness~\cite{nesterov2013introductory}), their guarantees will always be of the form $\eta_{\rm C} = \frac{\Delta^2}{ t^\alpha}$ where $\Delta$ is a notion of diameter of the base polytopes and $\alpha=2$ for accelerated algorithms and $\alpha=1$ for  plain algorithms. 
The overall discrete gap is thus up to constant terms
\[
{\eta_{\rm D}} = \frac{\Delta \sqrt{n}}{t^{\alpha/2}}
+ \frac{\Delta^2}{\varepsilon t^\alpha}.
\]
We see clearly that the final bound on the (discrete) gap is decreasing with $\varepsilon$. This suggest to use $\varepsilon$ proportional to $\frac{\Delta}{\sqrt{n t^\alpha}}$ to take it as small as possible while only losing a factor of 2 in the convergence bound.

\paragraph{Guarantees for FISTA applied to the dual of \eq{dual3}.}
The function $\psi^\ast$ is $O(1)$-smooth, and the objective in \eq{dual3} is $r$-smooth. Each $B(F_i)$ has a square diameter less than
less $\Delta_i^2 = \sum_{j=1}^n \big[ F_i(\{j\}) + F_i( V \backslash \{j\} ) - F_i(V) \big]^2$. Thus, in the result above, we have
$\Delta^2 = r \sum_{i=1}^r \Delta_i^2$ and $\alpha=2$. Owing to~\cite[Cor. 2(b)]{tseng}, these guarantees extend to the corresponding primal iterate $w$.

\paragraph{Guarantees for primal-dual algorithms applied to \eq{dual3}.}
We consider the primal-dual formulation
\[
\min_{w \in [-\varepsilon, \varepsilon]^n} 
\max_{(t_1,\dots,t_r) \in B(F_1) \times \cdots \times B(F_r) }
 w^\top \big( \sum_{i=1}^r t_i \big) + \sum_{j=1}^n \psi(w_j).
\]
The primal set has squared diameter $n \varepsilon^2$; the dual set has squared diameter less than $\sum_{i=1}^r \Delta_i^2$, the bilinear function has a largest singular value equal to $\sqrt{r}$. Thus, from~\cite{chambolle2016ergodic}, we get a guarantee from a primal-dual algorithm, of the form
$\Delta^2 = r \sum_{i=1}^r \Delta_i^2 + \varepsilon \sqrt{nr}   \sqrt{\sum_{i=1}^r \Delta_i^2}$. We thus get overall
a guarantee of the same form as above, with the same dependency in $\varepsilon$.
\begin{figure}[t]
\centering
\begin{tabular}{ccc}
\includegraphics[width=0.3\textwidth]{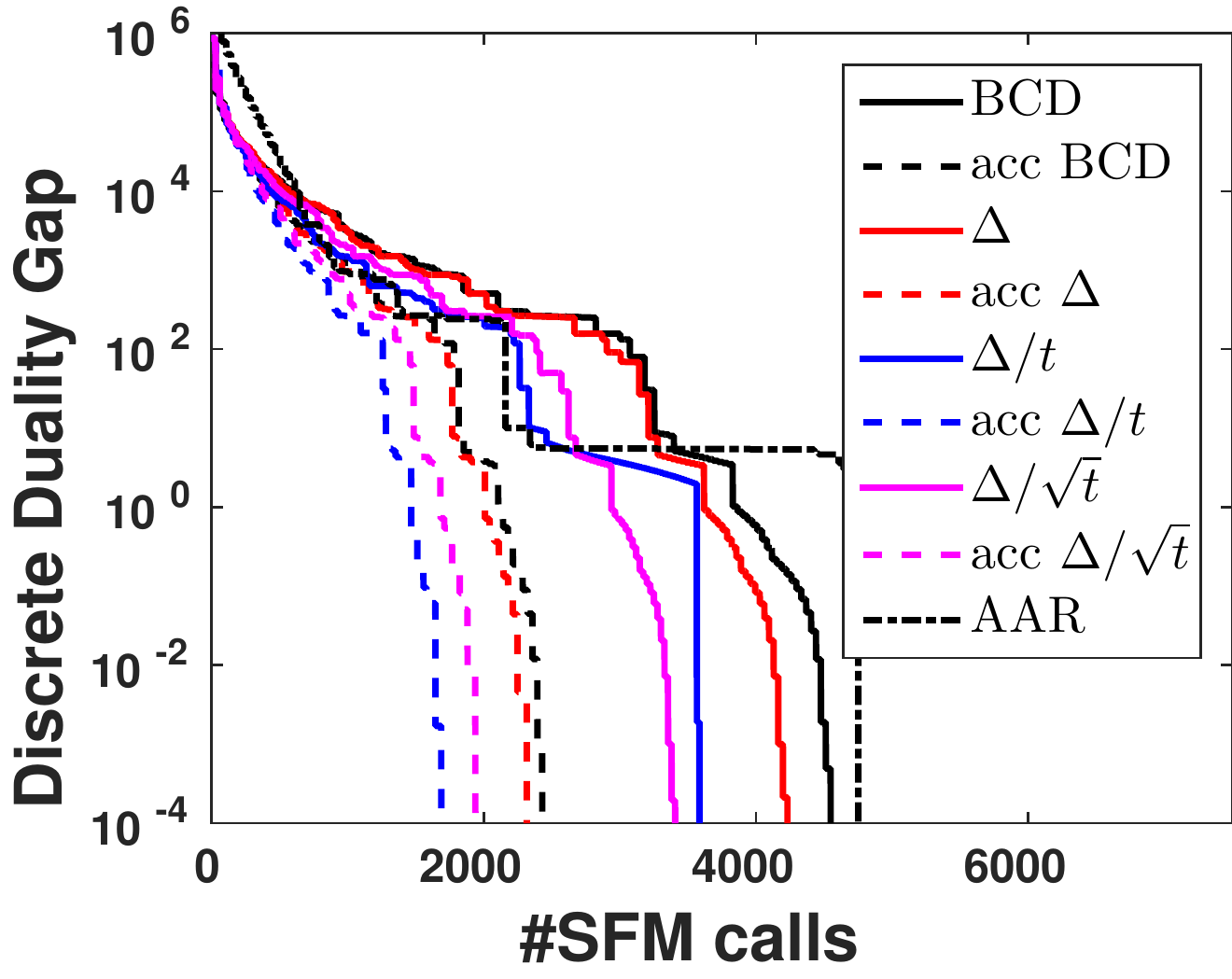} &
\includegraphics[width=0.3\textwidth]{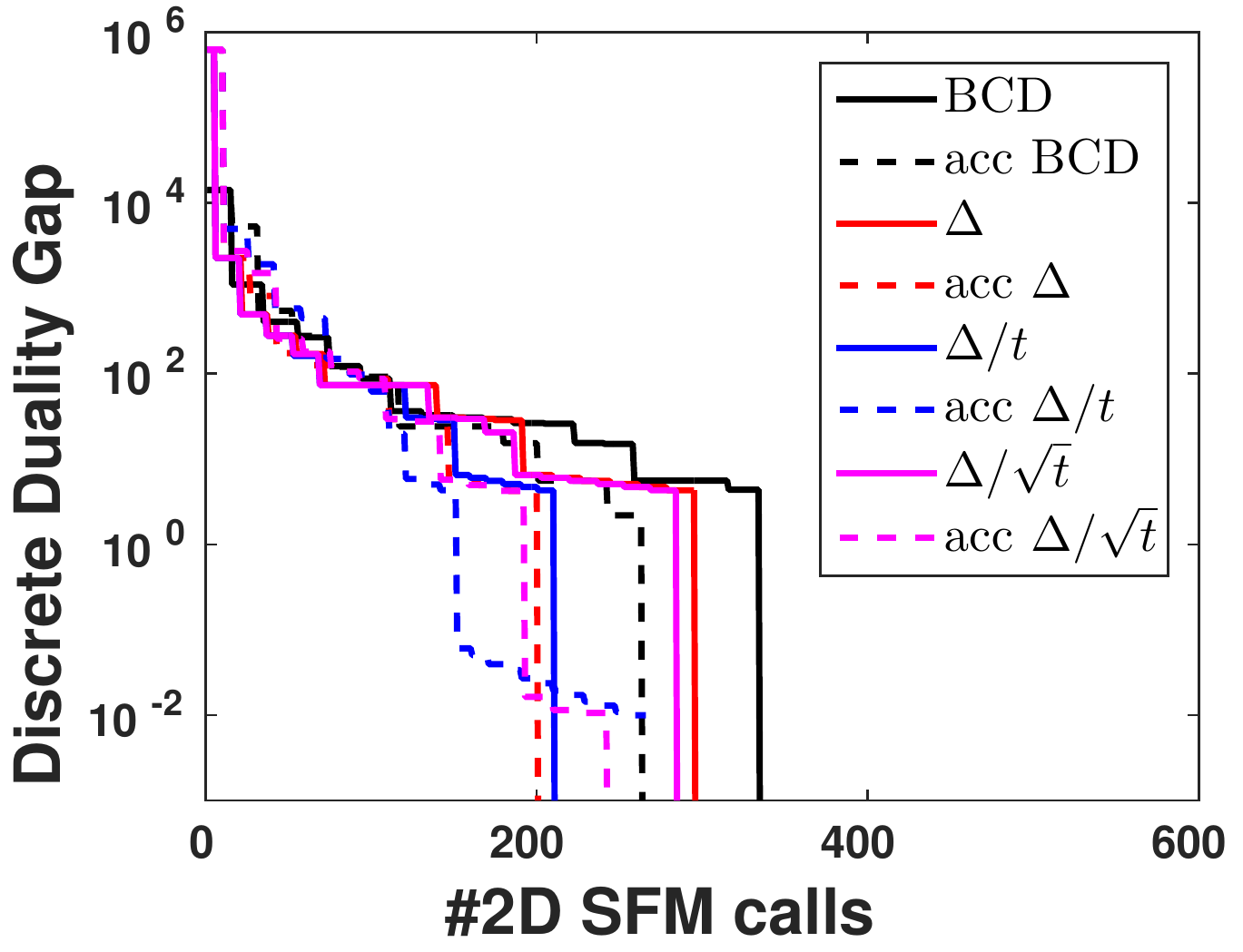} &
\includegraphics[width=0.3\textwidth]{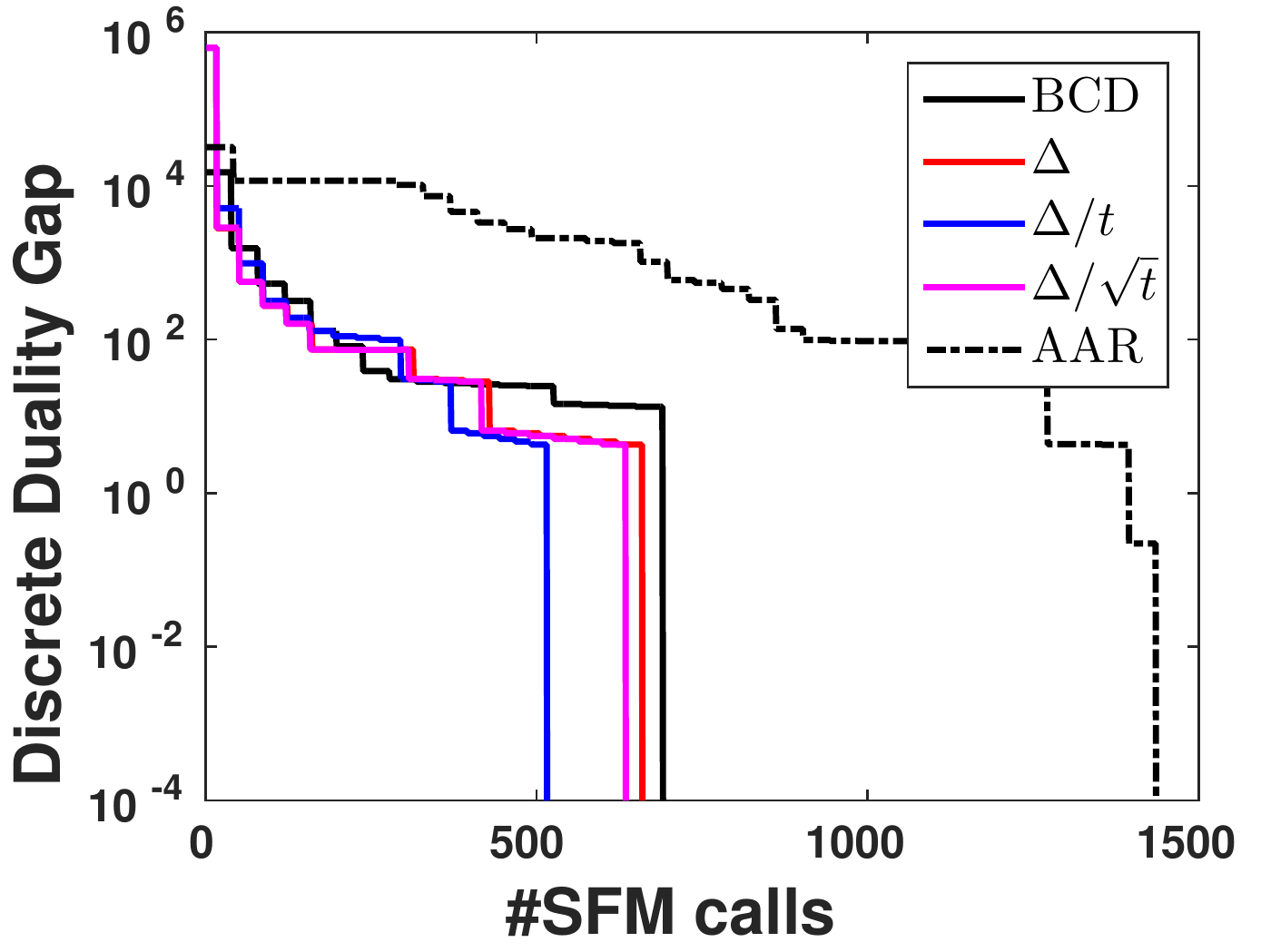} \\
(a) & (b) & (c)
\end{tabular}
\caption{Comparison to state-of-the-art algorithms for 2D and 3D SFM.}
\label{fig:SFM}
\end{figure}

\section{Experiments}
\label{sec:experiments}
In this section, we consider the minimization of cut functions~\cite{boykov2001fast} that are an important examples of submodular functions. In our experiments, we consider the problem of minimizing cuts on 2D images and 3D volumetric surfaces for segmentation. We consider a two-dimensional image of size $n = 2400 \times 2400 = 5.8 \times 10^6$ pixels,  and a 3D volumetric surface of size $n = 102 \times 100 \times 79 = 8.1 \times 10^5$ voxels. The SFM oracles are obtained by using max-flow codes, which is the dual of the min-cut problem. We compare our results to the standard block coordinate descent  (\texttt{BCD})~\cite{treesubmod} and averaged alternating reflections algorithm (\texttt{AAR})~\cite{treesubmod}, which are using full total variation oracles (which we solve using the usual divide-and-conquer algorithm that is only using SFM calls). 

In our approach, we have a  parameter $\varepsilon$ dependent on $\frac{\Delta}{\sqrt{n t^\alpha}}$, where $\Delta$ is the notion of diameter of the base polytope, $n$ is the number of elements in the ground set and $t$ is the number of iterations. In our experiments, we choose $\varepsilon$ proportional to \texttt{$\Delta$, $\Delta / t$} and \texttt{$\Delta / \sqrt{t}$} and respectively represent them by the same terms in \myfig{SFM}. For the case of the sum of two functions, the block coordinate descent can be accelerated~\cite{chambolle2015remark} as shown in \mysec{acceleration}. We refer to their accelerated versions as \texttt{acc BCD, acc $\Delta$, acc $\Delta / t$} and \texttt{acc $\Delta / \sqrt{t}$} respectively. Therefore, \texttt{BCD, acc BCD, AAR} use quadratic $\psi$ and the rest use $\psi$ as defined in \eq{psi}.

\myfig{SFM} shows the performance of various algorithms on different problems which we detail below. The horizontal axes represents the number of discrete minimization oracles, i.e., ${\rm SFM\textbf{D}}_i$ required to solve the SFM and the vertical axes represents the discrete duality gap given by
\[
{\rm gap}(A,s) = F(A) - s_{-}(V),\]
where $A \subset V$, $s \in B(F)$ are the discrete primal-dual pairs and $s_{-}(V) = \sum_{i=1}^n \min(s_i, 0)$. We consider three experiments that may be broadly classified into sum of two functions and sum of three functions.

\paragraph{Sum of two functions ($r=2$).} In this case, we consider minimization of the submodular function that can be written as sum of two submodular functions, i.e., $F = F_1 + F_2$. We consider the problem of mininiming graph cuts on 2D grid that can be written as the sum of horizontal and vertical chain graphs in \myfig{SFM}-(a). In this case, the ${\rm SFM\textbf{D}}_i$ orcale represents the min-cut on a chain  graph while the SFM problem represents min-cut on a 2D grid. We can observe that the constrained total variation formulation reduces the number of min-cut/ max-flow calls when compared to full total variation. Here, we explicitly calculate the diameter of the base polytope $\Delta$ for choosing $\varepsilon$.

\begin{figure}[htb]
\centering
\begin{tabular}{ccc}
\includegraphics[width=0.3\textwidth]{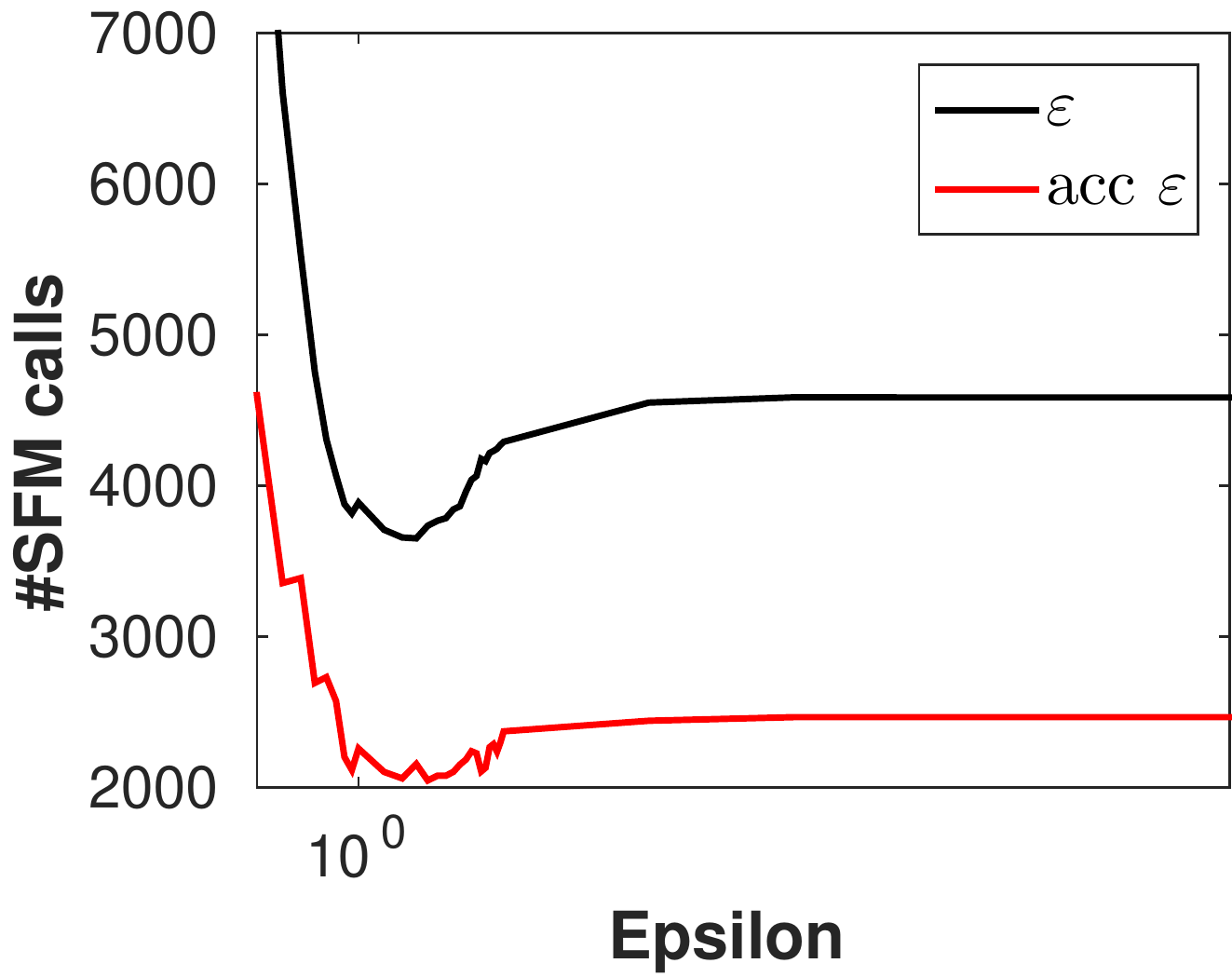} &
\includegraphics[width=0.3\textwidth]{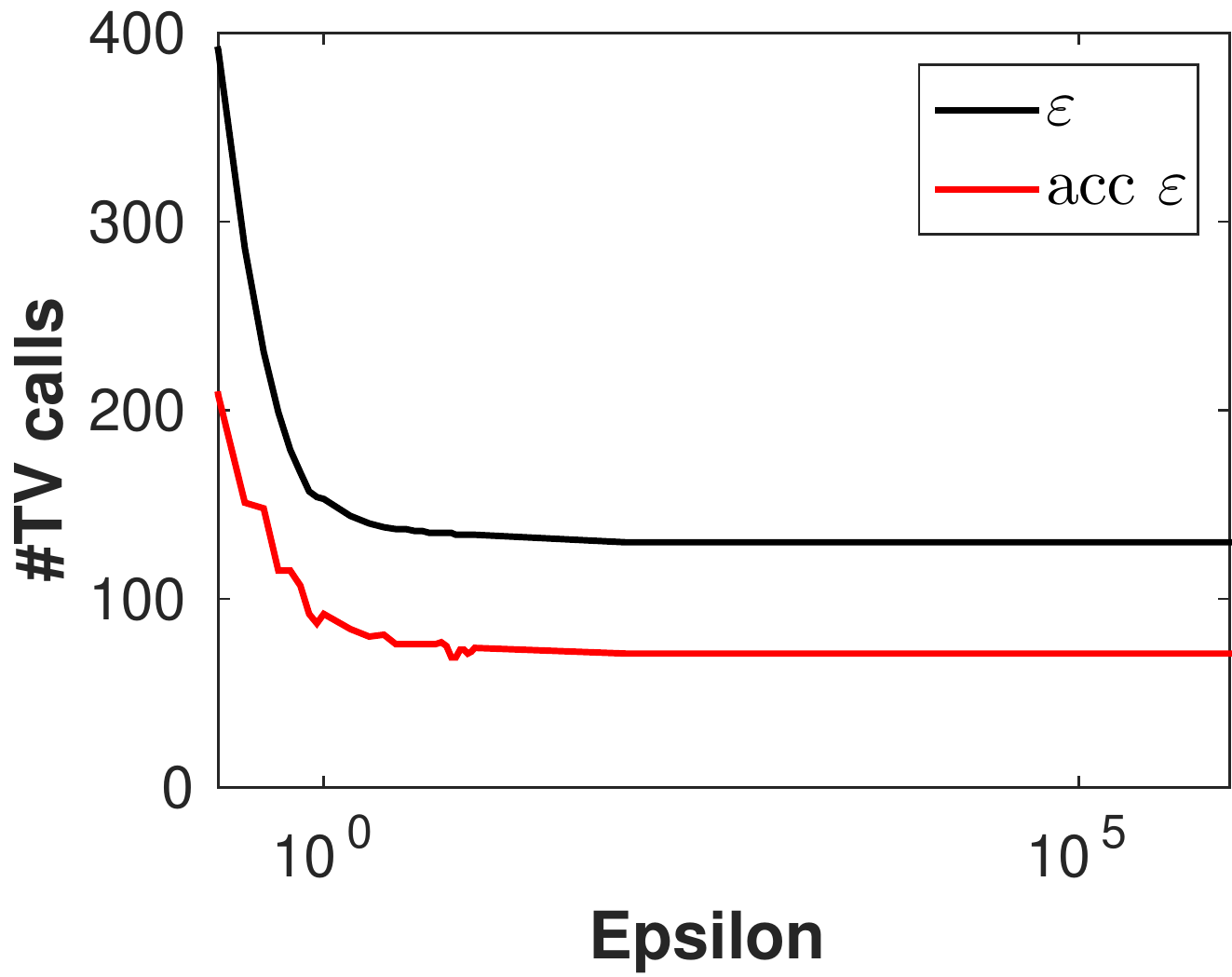} &
\includegraphics[width=0.3\textwidth]{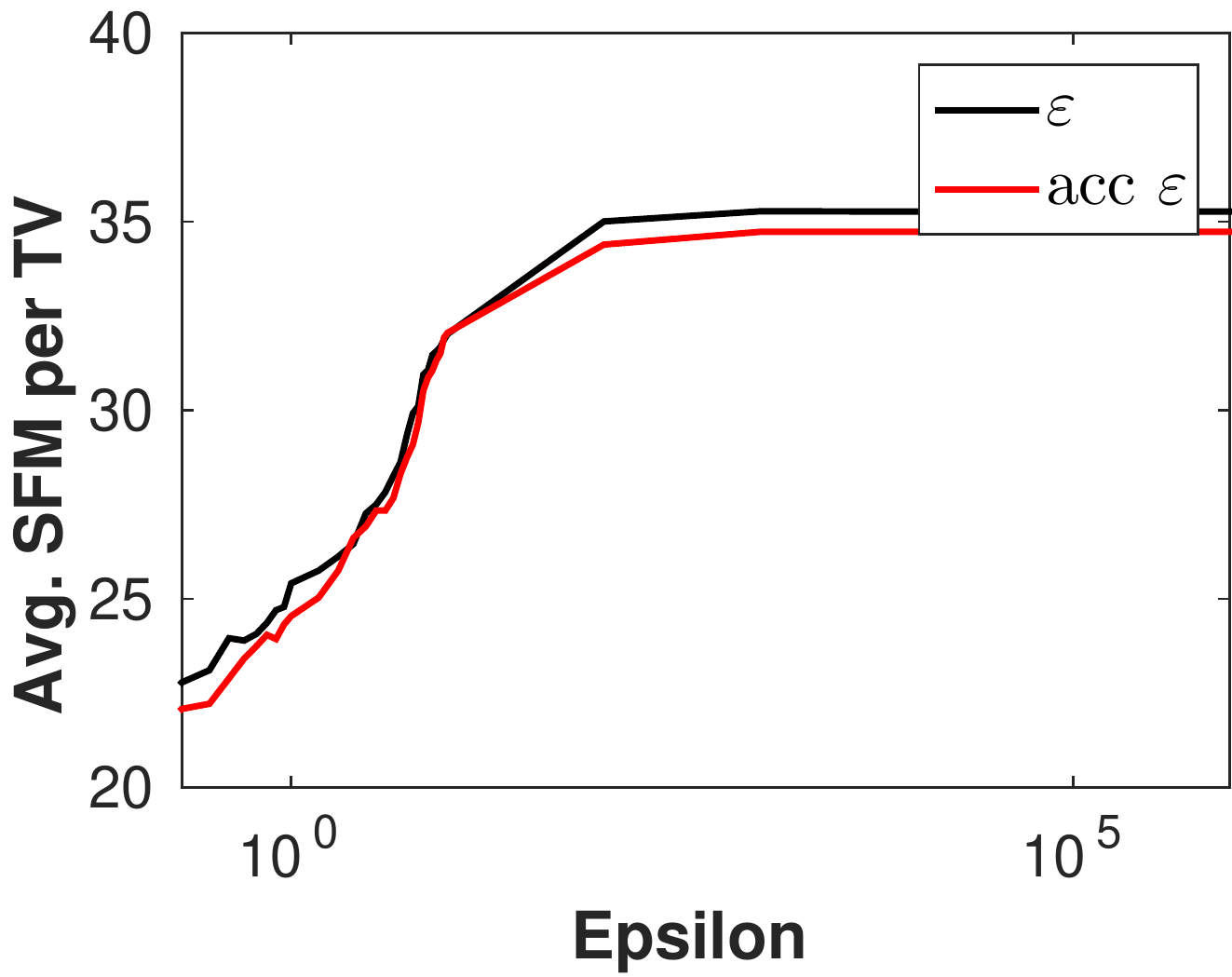}\\
(a) & (b) & (c) \\
\includegraphics[width=0.3\textwidth]{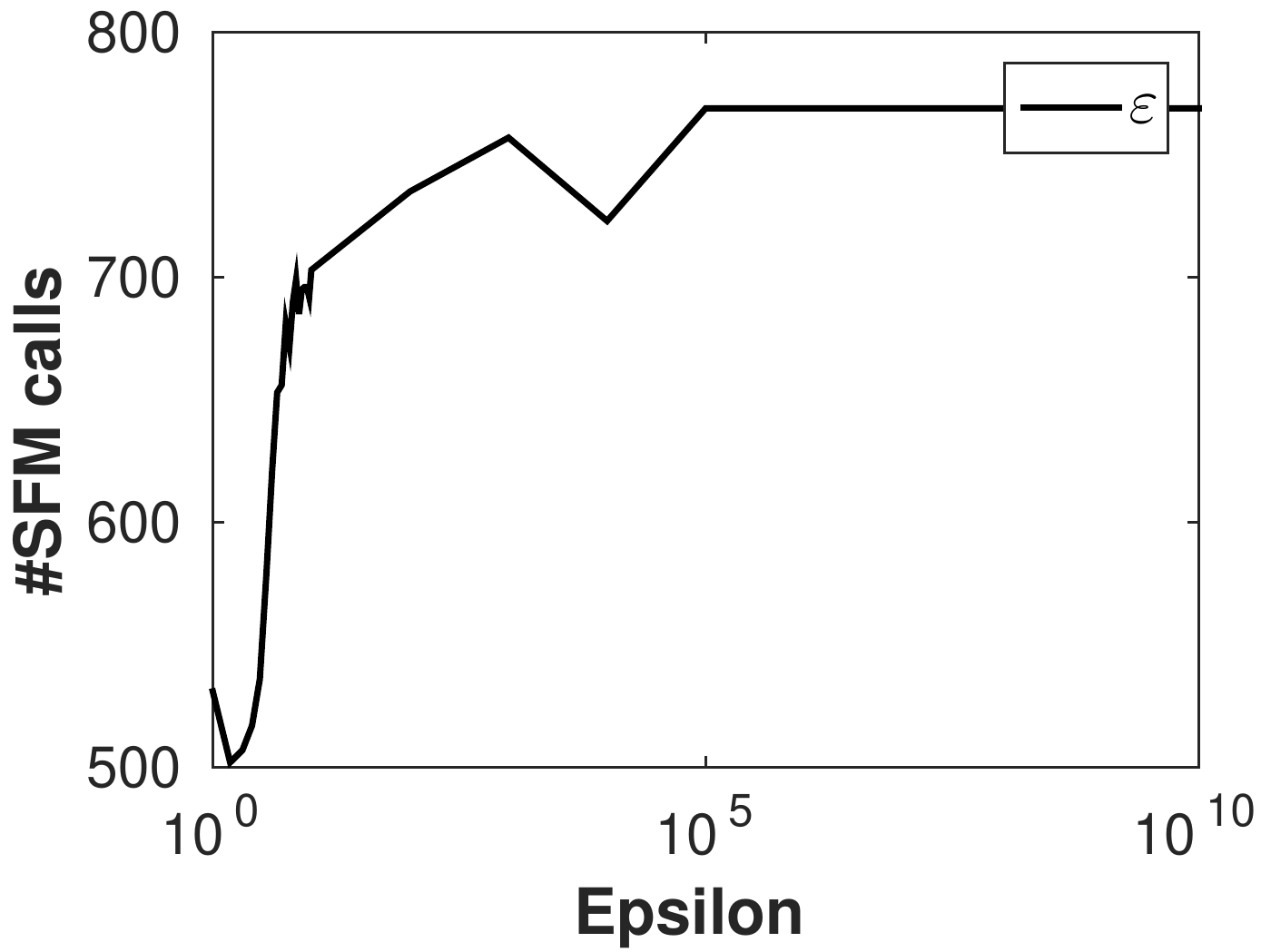} &
\includegraphics[width=0.3\textwidth]{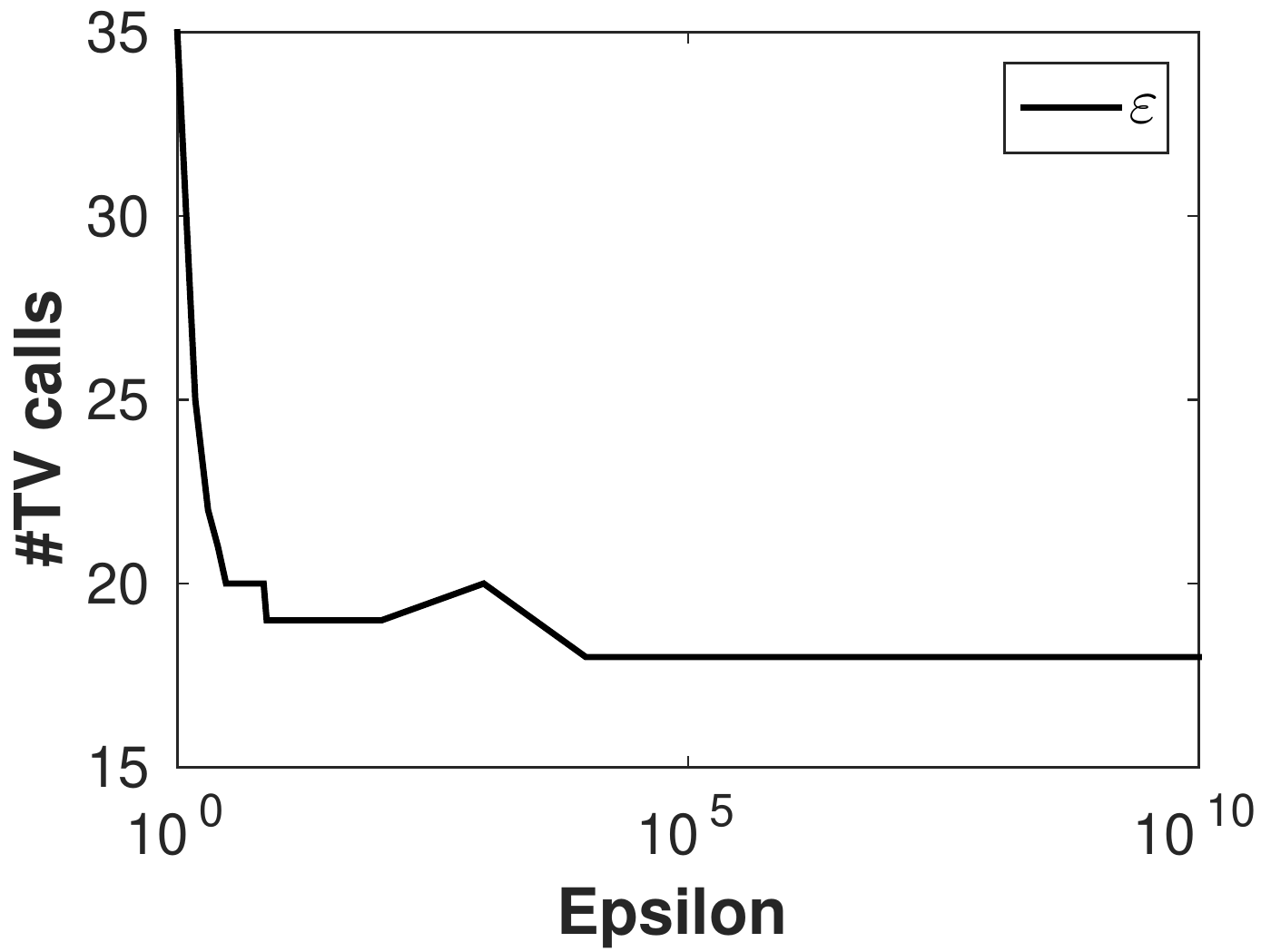} &
\includegraphics[width=0.3\textwidth]{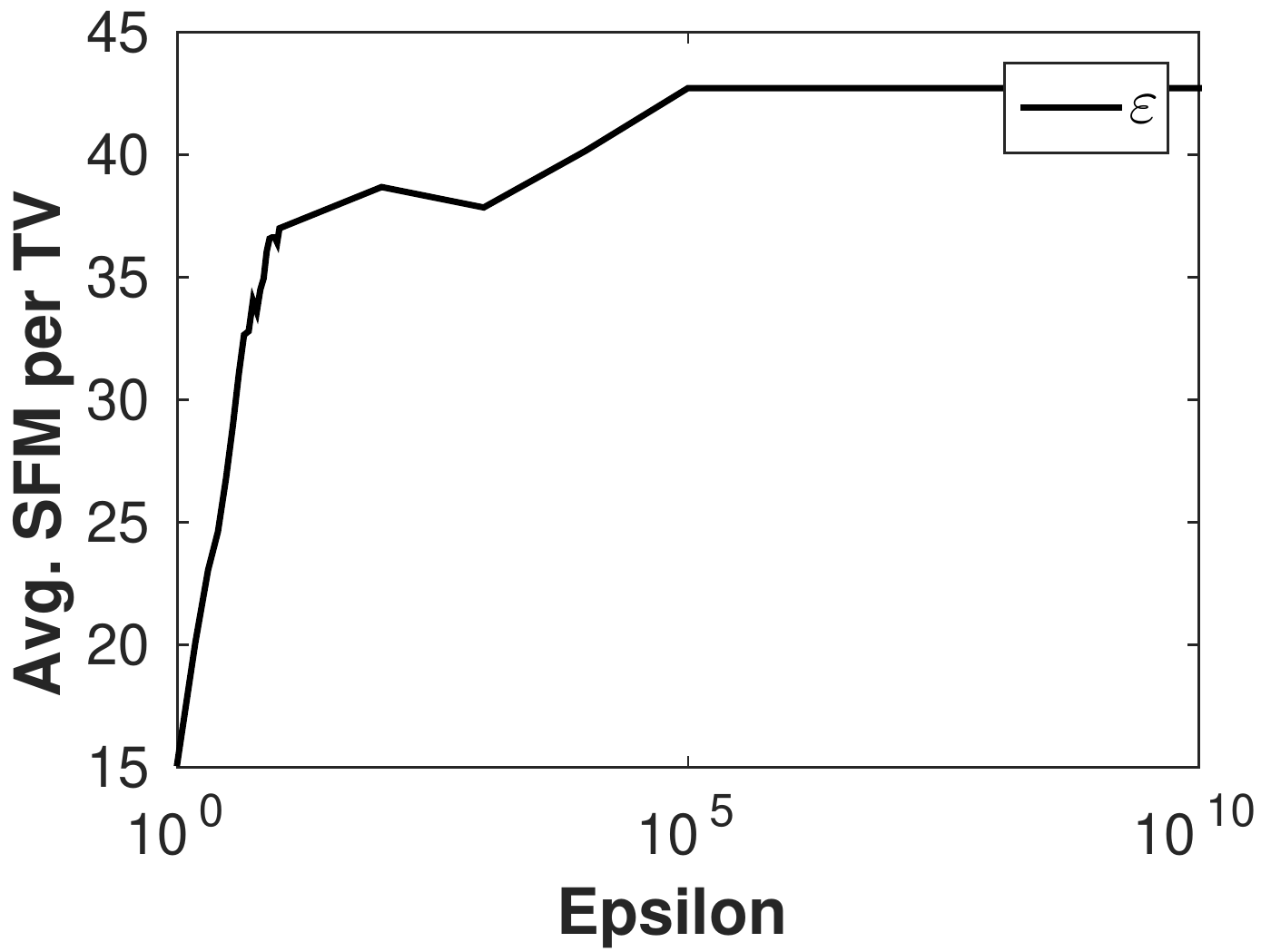}\\
(d) & (e) & (f)
\end{tabular}
\caption{\texttt{BCD} and \texttt{acc BCD} for  2D function: $F = F_1 + F_2$ and 3D functions: $F = F_1 + F_2 + F_3$.}
\label{fig:epsilon}
\end{figure}

\myfig{epsilon}-(a) shows the total number of ${\rm SFM\textbf{D}}_i$ oracle calls required to solve the SFM problem for different values of $\varepsilon$. \myfig{epsilon}-(b) shows the total number of constrained TV ${\rm SFM\textbf{C}}_i$ calls required to solve the SFM problem. \myfig{epsilon}-(c) shows the average number of ${\rm SFM\textbf{D}}_i$ oracle calls required to solve a single ${\rm SFM\textbf{C}}_i$ problem. The algorithms considered in these graphs are  BCD and accelerated BCD algorithms using constrained total variation represented by \texttt{$\varepsilon$} and \texttt{acc $\varepsilon$} respectively. We clearly see the trade-off for the choice of  $\varepsilon$: the number of SFM calls per TV calls increases with $\varepsilon$, while the number of TV calls decreases, leading to intermediate values of $\varepsilon$ which lead to significant gains in the total number of SFM calls in \myfig{epsilon}-(a).

We consider 3D grid that can be decomposed into 2D frames and chains graphs. In \myfig{SFM}-(b), we show the performance of our algorithm compared to other state-of-the-art algorithms for this decomposition. In this case we use two different discrete oracles ${\rm SFM\textbf{D}}_i$, i.e., min-cut on a chain and min-cut on a 2D grid to solve SFM, i.e., min-cut on the 3D grid. We show only the number of oracle calls to min-cut on 2D grids for analysis as they are more expensive than min-cuts on chains. 

\paragraph{Sum of three functions ($r=3$).} In this case, we consider minimization of the submodular function that can be written as sum of three submodular functions, i.e., $F = F_1 + F_2 + F_3$. Min-cut on the 3D grid can also be seen as sum of chain graphs in three directions, thereby using discrete minimization oracles only of the chain graphs. \myfig{SFM}-(c) shows the number of calls to 1D min-cut to solve the 3D min-cut problem using various continuous optimization problems and algorithms. Our approach considerably reduces the number of calls to 1D min-cut (${\rm SFM\textbf{D}}_i$) oracles. \myfig{epsilon}-(d) shows the total number of 1D min-cuts (${\rm SFM\textbf{D}}_i$) to solve 3D SFM for various values of $\varepsilon$. \myfig{epsilon}-(e) shows the total number of constrained total variation ${\rm SFM\textbf{C}}_i$ calls required to solve the SFM problem. \myfig{epsilon}-(e) shows the average number of ${\rm SFM\textbf{D}}_i$ oracle calls required to solve ${\rm SFM\textbf{C}}_i$ for this problem. We observe a similar behavior than for $r=2$, with best values of $\varepsilon$ not being very small or very large.




\section{Conclusion}
\label{sec:conclusion}

In this paper, we have proposed a simple modification of state-of-the-art algorithms for decomposable submodular function minimization. Adding box constraints to the continuous optimization problems allow for significant reduction in the number of individual submodular function minimization calls. The application of accelerated block coordinate ascent techniques makes the speed-up stronger. These techniques are easily parallelizable and it would be interesting to compare to dedicated parallel algorithms for graph cuts~\cite{shekhovtsov2011distributed}. Moreover, these speed-ups could be extended to more general submodular optimization problems~\cite{bach2016submodular}.

\paragraph{Acknowledgments.}
This research was partially funded by the Leverhulme Centre for the Future of Intelligence and the Data Science Institute at Imperial College London. We acknowledge support from the European Research Council (SEQUOIA project 724063) and (HOMOVIS project 640156).

\bibliography{super_tree.bib}

\end{document}